\DeclareMathOperator{\atanh}{atanh}
\renewcommand{\defn}[1]{{\bf #1}}
\begin{document}

\title[Notes on computational-to-statistical gaps]{Notes on computational-to-statistical gaps: predictions using statistical physics}

\dedicatory{In memory of Amelia Perry, and her love for learning and teaching.}

\author{Afonso S.\ Bandeira}
\address[Bandeira]{Department of Mathematics and Center for Data Science, Courant Institute of Mathematical Sciences, New York University}
\email{bandeira@cims.nyu.edu}

\author{Amelia Perry}
\address[Perry]{Department of Mathematics, Massachusetts Institute of Technology}

\author{Alexander S.\ Wein}
\address[Wein]{Department of Mathematics, Massachusetts Institute of Technology}
\email{awein@mit.edu}

\thanks{ASB was partially supported by NSF DMS-1712730 and NSF DMS-1719545.}
\thanks{AP was supported in part by NSF CAREER Award CCF-1453261 and a grant from the MIT NEC Corporation.}
\thanks{ASW received Government support under and awarded by DoD, Air Force Office of Scientific Research, National Defense Science and Engineering Graduate (NDSEG) Fellowship, 32 CFR 168a.}

\begin{abstract}

In these notes we describe heuristics to predict computational-to-statistical gaps in certain statistical problems. These are regimes in which the underlying statistical problem is information-theoretically possible although no efficient algorithm exists, rendering the problem essentially unsolvable for large instances. The methods we describe here are based on mature, albeit non-rigorous, tools from statistical physics.

These notes are based on a lecture series given by the authors at the Courant Institute of Mathematical Sciences in New York City, on May $16^\text{th}$, 2017.

\end{abstract}

\maketitle

\section{Introduction}

Statistics has long studied how to recover information from data. Theoretical statistics is concerned with, in part, understanding under which circumstances such recovery is possible. Oftentimes recovery procedures amount to computational tasks to be performed on the data that may be computationally expensive, and so prohibitive for large datasets. While computer science, and in particular complexity theory, has focused on studying hardness of computational problems on worst-case instances, time and time again it is observed that computational tasks on data can often be solved far faster than the worst case complexity would suggest. This is not shocking; it is simply a manifestation of the fact that instances arising from real-world data are not adversarial. This illustrates, however, an important gap in fundamental knowledge: the understanding of \textbf{``computational hardness of statistical estimation problems''}.

For concreteness we will focus on the case where we want to learn a set of parameters from samples of a distribution, or estimate a signal from noisy measurements (often two interpretations of the same problem). In the problems we will consider, there is a natural notion of signal-to-noise ratio (SNR) which can be related to the variance of the distribution of samples, the strength of the noise, the number of samples or measurements obtained, the size of a hidden planted structure buried in noise, etc. Two ``phase transitions'' are often studied. Theoretical statistics and information theory often study the critical SNR below which it is statistically impossible to estimate the parameters (or recover the signal, or find the hidden structure), and we call this threshold $\mathrm{SNR_{Stat}}$. On the other hand, many algorithm development fields propose and analyze efficient algorithms to understand for which SNR levels different algorithms work. Despite significant effort to develop ever better algorithms, there are various problems for which no efficient algorithm is known to achieve recovery close to the statistical threshold $\mathrm{SNR_{Stat}}$. Thus we are interested in the critical threshold $\mathrm{SNR_{Comp}} \geq \mathrm{SNR_{Stat}}$ below which it is fundamentally impossible for an efficient (polynomial time) algorithm to recover the information of interest.

\vspace{-10pt}

\begin{figure}[!ht]
    \centering
        \includegraphics[width=1.1\linewidth]{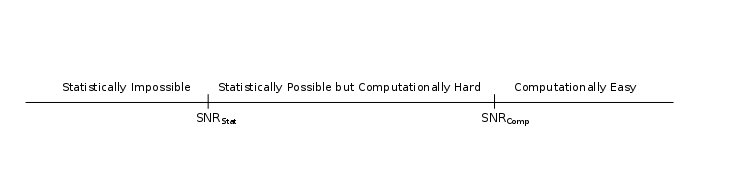}
    \label{fig:snr}
\end{figure}

\vspace{-10pt}

There are many problems believed to exhibit computational-to-statistical gaps. Examples include community detection~\cite{holland1983stochastic,decelle,abbe-detection,bmnn}, planted clique~\cite{Alon_Krivelevich_Sudakov_98,amp-clique,Baraketal_PlantedClique}, sparse principal component analysis~\cite{Berthet_Rigollet_AoS,Berthet_Rigollet_SparsePCAcolt,lkz-sparse}, structured spiked matrix models~\cite{lkz-mmse,pwbm-contig,mi,replica-proof,lm}, spiked tensor models~\cite{mr-tensor,Hopkins_Shi_Steurer_fastTensorPCASOS,pwb-tensor,lesieur2017statistical,KimBandeiraGoemans_sampta17}, and synchronization problems over groups~\cite{ASinger_2011_angsync,pwbm-contig,pwbm-amp}.

In these notes we will be concerned with predicting the locations of the thresholds $\mathrm{SNR_{Stat}}$ and $\mathrm{SNR_{Comp}}$ for Bayesian inference problems. In particular, we will focus on a couple of heuristics borrowed from statistical physics and illustrate them on two example problems: the Rademacher spiked Wigner problem (Example~\ref{ex:radwig}) and the related problem of community detection in the stochastic block model (Example~\ref{ex:cdsbm}). While we focus on these problems, we will try to cover the techniques in a way that conveys how they are broadly applicable.

At first glance, it may seem surprising that statistical physics has anything to do with Bayesian inference problems. The connection lies in the \emph{Gibbs (or Boltzmann) distribution} that is widely used in statistical physics to model disordered systems such as magnets. It turns out that in many Bayesian inference problems, the posterior distribution of the unknown signal given the data also follows a Gibbs distribution, and thus many techniques from statistical physics can be applied. More specifically, many inference problems follow similar equations to \emph{spin glasses}, which are physical systems in which the interaction strength between each pair of particles is random. The techniques that we borrow from statistical physics are largely non-rigorous but yield extremely precise predictions of both the statistical and computational limits. Furthermore, the predictions made by these heuristics have now been rigorously verified for many problems, and thus we have good reason to trust them on new problems. See the survey \cite{statmech-survey} for more on the deep interplay between statistical physics and inference.

Many techniques have been developed in order to understand computational-to-statistical gaps. We now give a brief overview of some of these methods, both the ones we will cover in these notes and some that we will not.

\subsubsection*{Reductions}
A natural approach to arguing that a task is computationally hard is via reductions, by showing that a problem is computationally hard conditioned on another problem being hard. This technique is extremely effective when studying the worst-case hardness of computational problems (a famous example being the list of 21 NP-hard combinatorial problems of Karp~\cite{Karp21problems}). There are also some remarkable successes in using this idea in the context of average-case problems (i.e.\ statistical inference on random models), starting with the work of Berthet and Rigollet on sparse PCA~\cite{Berthet_Rigollet_AoS,Berthet_Rigollet_SparsePCAcolt} and including also some conditional lower bounds for community detection with sublinear sized communities~\cite{Ma_Wu_complowerbounds,Hajek_Wu_Xu_complowerbounds}. These works show conditional hardness by reduction to the planted clique problem, which is widely believed to be hard in certain regimes. Unfortunately this method of reductions has so far been limited to problems that are fairly similar to planted clique.

\subsubsection*{Sum-of-squares hierarchy}
Sum-of-squares~\cite{Lassere_01_SOS,Parrilo_thesis_SOS,Nesterov_00_SOS,Shor_87_SOS,Barak_Steurer_surveyICM} is a hierarchy of algorithms to approximate solutions of combinatorial problems, or more generally, polynomial optimization problems. For each positive integer $d$, the algorithm at level $d$ of the hierarchy is a semidefinite program that relaxes the notion of a distribution over the solution space by only keeping track of moments of order $\le d$. As you go up the hierarchy (increasing $d$), the algorithms get more powerful but also run slower: the runtime is $n^{O(d)}$. The level-2 relaxation coincides with the algorithms in the seminal work of Goemans and Williamson~\cite{MXGoemans_DPWilliamson_1995} and Lovasz~\cite{Lovasz_ShannonCapacity}. The celebrated unique games conjecture of Khot implies that the level-2 algorithm gives optimal worst-case approximation ratio for a wide class of problems~\cite{SKhot_2002,Raghavendra_2008_optimalitySDP_UG,SKhot_2010}. Sum-of-squares algorithms have also seen many success stories for average-case inference problems such as planted sparse vector \cite{Barak_Kelner_Steurer_SOS,sos-fast}, dictionary learning \cite{sos-dictionary}, tensor PCA \cite{Hopkins_Shi_Steurer_fastTensorPCASOS}, tensor decomposition \cite{sos-dictionary,ge-ma-decomp,sos-fast,sos-tensor-decomp-poly}, and tensor completion \cite{barak-moitra,sos-tensor-completion-exact}. One way to argue that an inference problem is hard is by showing that the sum-of-squares hierarchy fails to solve it at a particular level $d$ (or ideally, at every constant value of $d$). Such lower bounds have been shown for many problems such as planted clique~\cite{Baraketal_PlantedClique} and tensor PCA~\cite{Hopkins_Shi_Steurer_fastTensorPCASOS}. There is also recent work that gives evidence for computational hardness by relating the power of sum-of-squares to the low-degree moments of the posterior distribution \cite{hs-few-samples}.

\subsubsection*{Belief propagation, approximate message passing, and the cavity method}

Another important heuristic to predict computational thresholds is based on ideas from statistical physics and is often referred to as the \emph{cavity method} \cite{mezard-parisi-virasoro}. It is based on analyzing an iterative algorithm called \emph{belief propagation} (BP) \cite{pearl}, or its close relative \emph{approximate message passing} (AMP) \cite{amp}. Specifically, BP has a trivial fixed point wherein the algorithm fails to perform inference. If this fixed point is stable (attracting) then we expect inference to be computationally hard. In these notes we will cover this method in detail. For further references, see \cite{mm-book,statmech-survey}.

\subsubsection*{Replica method and the complexity of the posterior}

Another method borrowed from statistical physics is the \emph{replica method} (see e.g.\ \cite{mm-book}). This is a mysterious non-rigorous calculation from statistical physics that can produce many of the same predictions as the cavity method. One way to think about this method is as a way to measure the complexity of the posterior distribution. In particular, we are interesting in whether the posterior distribution resembles one big connected region or whether it fractures into disconnected clusters (indicating computational hardness). We will cover the replica method in Section~\ref{sec:replica} of these notes.

\subsubsection*{Complexity of a random objective function} 

Another method for investigating computational hardness is through the lens of non-convex optimization. Intuitively, we expect that ``easy'' optimization problems have no ``bad'' local minima and so an algorithm such as gradient descent can find the global minimum (or at least a point whose objective value is close to the global optimum). For Bayesian inference problems, maximum likelihood estimation amounts to minimizing a particular random non-convex function. One tool to study critical points of random functions is the \emph{Kac-Rice formula} (see~\cite{AdlerTaylor_book} for an introduction). This has been used to study optimization landscapes in settings such as spin glasses \cite{abac}, tensor decomposition \cite{Ge_Ma_LandscapeTensorDecomposition}, and problems arising in community detection~\cite{Bandeira_SDPlowrank_COLT}. There are also other methods to show that there are no spurious local minima in certain settings, e.g.\ \cite{ge-no-spurious,Boumal_SDPlowrank_NIPS,Venturi_NNnospuriousvalleys}.

\section{Setting and vocabulary}

Throughout, we'll largely focus on Bayesian inference problems. Here we have a signal $\sigma^* \in \RR^n$ viewed through some noisy observation model. We present two examples, and examine them simultaneously through the parallel language of machine learning and statistical physics.

\begin{example}[Rademacher spiked Wigner]\label{ex:radwig}
The signal $\sigma^*$ is drawn uniformly at random from $\{\pm 1\}^n$. We observe the $n \times n$ matrix
$$ Y = \frac{\lambda}{n} \sigma^* (\sigma^*)^\top + \frac{1}{\sqrt{n}} W, $$
where $\lambda$ is a signal-to-noise parameter, and $W$ is a GOE matrix\footnote{Gaussian orthogonal ensemble: symmetric with the upper triangle drawn \iid as $\cN(0,1)$.}. We wish to approximately recover $\sigma^*$ from $Y$, up to a global negation (since $\sigma^*$ and $-\sigma^*$ are indistinguishable).
\end{example}

This problem is motivated by the statistical study of the \emph{spiked Wigner model} from random matrix theory (see e.g.\ \cite{pwbm-contig}). This model has also been studied as a Gaussian variant of community detection \cite{dam} and as a model for synchronization over the group $\ZZ/2$ \cite{sdp-phase}.

\begin{example}[Stochastic block model]\label{ex:cdsbm}
The signal $\sigma^*$ is drawn uniformly at random from $\{\pm 1\}^n$. We observe a graph $G$ with vertex set $[n] = \{1,\ldots,n\}$, with edges drawn independently as follows: for vertices $u,v$, we have $u \sim v$ with probability $a/n$ if $\sigma_u \sigma_v = 1$, and probability $b/n$ if $\sigma_u \sigma_v = -1$. We will restrict ourselves to the case $a > b$. We imagine the entries $\sigma_u^*$  as indicating membership of vertex $u$ in either the $+1$ or $-1$ `community'; thus vertices in the same community are more likely to share an edge. We wish to approximately recover the community structure $\sigma^*$ (up to global negation) from $G$.
\end{example}

This is a popular model for community detection in graphs. See e.g. \cite{abbe-survey,moore-survey} for a survey. Here we consider the sparse regime, but other regimes are also considered in the literature.

There is a key difference between the two models above. The Rademacher spiked Wigner model is \emph{dense} in the sense that we are given an observation for every pair of variables. On the other hand, the stochastic block model is \emph{sparse} (at least in the regime we have chosen) because essentially all the useful information comes from the observed edges, which form a sparse graph. We will see that different tools are needed for dense and sparse problems.

\subsection{Machine learning view.}
We are interested in inferring the signal $\sigma^*$, so it is natural to write down the posterior distribution. For the Rademacher spiked Wigner problem (Example~\ref{ex:radwig}), we can compute the posterior distribution explicitly as follows:
\begin{align*}
    \Pr[\sigma \given Y] \propto \Pr[Y \given \sigma] &\propto \prod_{i < j} \exp\left( -\frac{n}{2} \left(Y_{ij} - \frac{\lambda}{n} \sigma_i \sigma_j \right)^2 \right) \\
    &= \prod_{i < j} \exp\left( -\frac{n}{2} Y_{ij}^2 + \lambda Y_{ij} \sigma_i \sigma_j - \frac{\lambda^2}{2n} \sigma_i^2 \sigma_j^2 \right) \\
    &\propto \prod_{i < j} \exp\left( \lambda Y_{ij} \sigma_i \sigma_j \right).
\end{align*}
(Here $\propto$ hides a normalizing constant which depends on $Y$ but not $\sigma$; it is chosen so that $\sum_{\sigma \in \{\pm 1\}^n} \Pr[\sigma \given Y] = 1$.) The above factorization over edges defines a \defn{graphical model}: a probability distribution factoring in the form $\Pr[\sigma] = \prod_{S \subset [n]} \psi_S(\sigma_S)$ into \defn{potentials} $\psi_S$ that each only depend on a small (constant-size) subset $S$ of the entries of $\sigma$. (For instance, in our example above, $S$ ranges over all subsets of size 2.)

\subsection{Statistical physics view.} The observation $Y$ defines a \defn{Hamiltonian}, or energy function, $H(\sigma) = \sum_{i < j} Y_{ij} \sigma_i \sigma_j$, consisting of two-spin interactions; we refer to each entry of $\sigma$ as a \defn{spin}, and to $\sigma$ as a \defn{state}. A Hamiltonian together with a parameter $T = \frac{1}{\beta}$, called the \defn{temperature}, defines a \defn{Gibbs distribution} (or \defn{Boltzmann distribution}):
$$ \Pr[\sigma] \propto e^{-\beta H(\sigma)}. $$
Thus low-energy states are more likely than high-energy states; moreover, at low temperature (large $\beta$), the distribution becomes more concentrated on lower energy states, becoming supported entirely on the minimum energy states (\defn{ground states}) in the limit as $\beta \to \infty$. On the other hand, in the high-temperature limit ($\beta \to 0$), the Gibbs distribution becomes uniform.

Connecting the ML and physics languages, we observe that the posterior distribution on $\sigma$ is precisely the above Gibbs distribution, at the particular inverse-temperature $\beta = \lambda$. (This is often referred to as lying on the \defn{Nishimori line} \cite{nish-1,nish-2,nish-book}, or being at Bayes-optimal temperature.)

\subsection{Optimization and statistical physics.}
A common optimization viewpoint on inference is \defn{maximum likelihood estimation}, or the maximization task of finding the state $\sigma$ that maximizes the posterior likelihood. This optimization problem is frequently computationally hard, but convex relaxations or surrogates may be studied. To rephraze this optimization task in physical terms, we wish to minimize the energy $H(\sigma)$ over states $\sigma$, or equivalently sample from (or otherwise describe) the low-temperature Gibbs distribution in the limit $\beta \to \infty$.

This viewpoint is limited, in that the MLE frequently lacks any \emph{a priori} guarantee of optimality. On the other hand, the Gibbs distribution at the true temperature $\beta = \lambda$ enjoys optimality guarantees at a high level of generality:
\begin{claim} Suppose we are given some observation $Y$ leading to a posterior distribution on $\sigma$. For any estimate $\hat\sigma = \hat\sigma(Y)$, define the (expected) mean squared error (MSE) $\EE \|\hat\sigma - \sigma\|_2^2$. The estimator that minimizes the expected MSE is given by $\hat\sigma = \EE[\sigma \given Y ]$, the posterior expectation (and thus the expectation under the Gibbs distribution at Bayes-optimal temperature).
\end{claim}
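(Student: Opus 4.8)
The plan is to reduce to a pointwise-in-$Y$ optimization by conditioning, and then invoke the $L^2$-orthogonality of conditional expectation. First I would use the tower property to write
\[ \EE \|\hat\sigma(Y) - \sigma\|_2^2 = \EE\Big[\, \EE\big[\, \|\hat\sigma(Y) - \sigma\|_2^2 \,\big|\, Y \,\big] \Big], \]
so it suffices to show that for (almost) every fixed value of $Y$ the inner conditional expectation is minimized, over all vectors $v \in \RR^n$, by $v = \EE[\sigma \given Y]$. Since the outer expectation is of a nonnegative quantity and $\hat\sigma$ ranges over arbitrary (measurable) functions of $Y$, choosing the minimizing $v$ separately for each $Y$ produces a valid estimator and hence the global minimizer; there is no coupling between different values of $Y$. (If one also allows randomized estimators, conditioning on $Y$ together with the external randomness gives the same bound, and averaging over the randomness can only increase the MSE.)

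Second, for fixed $Y$ write $m := \EE[\sigma \given Y] \in \RR^n$, a deterministic vector, and expand via the bias--variance identity: for any candidate $v \in \RR^n$,
\[ \|v - \sigma\|_2^2 = \|v - m\|_2^2 + 2\langle v - m,\; m - \sigma\rangle + \|m - \sigma\|_2^2. \]
Taking $\EE[\,\cdot \given Y]$, the cross term contributes $2\langle v - m,\; m - \EE[\sigma \given Y]\rangle = 0$ by the definition of $m$, so
\[ \EE\big[\|v - \sigma\|_2^2 \,\big|\, Y\big] = \|v - m\|_2^2 + \EE\big[\|m - \sigma\|_2^2 \,\big|\, Y\big], \]
where the last term (the total posterior variance) does not depend on $v$. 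Hence the unique minimizer is $v = m = \EE[\sigma \given Y]$, which proves the claim. Finally I would remark that, as computed in the preceding display, the posterior equals the Gibbs distribution at inverse temperature $\beta = \lambda$, so $\EE[\sigma \given Y]$ is exactly the expectation of $\sigma$ under the Bayes-optimal Gibbs measure.

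I do not expect a genuine obstacle: this is the standard statement that the conditional expectation is the $L^2$-projection onto $\sigma(Y)$-measurable functions. The only points deserving a line of care are (i) the measurability and integrability bookkeeping required to apply the tower property and to interchange it with the pointwise minimization over $v$, and (ii) checking that the relevant posterior second moments are finite so the decomposition is meaningful. Both are immediate here because $\sigma \in \{\pm 1\}^n$ is bounded, so all moments exist and every expectation in sight is finite.
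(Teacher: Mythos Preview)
Your argument is correct: this is the standard bias--variance decomposition showing that the conditional expectation is the $L^2$-projection onto $\sigma(Y)$-measurable functions, and your handling of the pointwise-in-$Y$ reduction and the boundedness/integrability bookkeeping is appropriate.

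For comparison with the paper: the paper does not actually supply a proof of this claim. It is stated as a well-known fact about Bayes estimation and immediately followed by a remark about the sign ambiguity in the Rademacher spiked Wigner model. So your write-up is strictly more detailed than what appears in the source; there is no alternative approach to contrast with.
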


\begin{remark}
In the case of the Rademacher spiked Wigner model, there is a caveat here: since $\sigma^*$ and $-\sigma^*$ are indistinguishable, the posterior expectation is zero. Our objective is not to minimize the MSE but to minimize the error between $\sigma$ and either $\hat\sigma$ or $-\hat\sigma$ (whichever is better).
\end{remark}

Thus the optimization approach of maximum likelihood estimation aims for too low a temperature.
aggregate likelihood. Intuitively, MLE searches for the single state with highest individual likelihood, whereas the optimal Bayesian approach looks for a large cluster of closely-related states with a high aggregate likelihood.

Fortunately, the true Gibbs distribution has an optimization property of its own:
\begin{claim} The Gibbs distribution with Hamiltonian $H$ and temperature $T > 0$ is the unique distribution minimizing the \defn{(Helmholtz) free energy}
$$ F = \mathbb{E}H - T S, $$
where $S$ denotes the Shannon entropy $S = - \EE_\sigma \log \Pr(\sigma)$.
\end{claim}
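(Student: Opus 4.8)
\emph{Proof proposal.} The plan is to reduce the statement to the non‑negativity of the Kullback--Leibler divergence (Gibbs' inequality). Write $\beta = 1/T$ and let $g(\sigma) = \frac{1}{Z}e^{-\beta H(\sigma)}$ be the Gibbs distribution, with $Z = \sum_\sigma e^{-\beta H(\sigma)} < \infty$ since the state space $\{\pm 1\}^n$ is finite. For an arbitrary distribution $p$ on states, expand the free energy as $F(p) = \EE_{\sigma \sim p} H(\sigma) + T\,\EE_{\sigma \sim p} \log p(\sigma)$ (using $S = -\EE_\sigma \log p(\sigma)$ and the convention $0\log 0 = 0$). First I would evaluate $F(g)$ directly: substituting $\log g(\sigma) = -\beta H(\sigma) - \log Z$ into the entropy term makes the two $\EE_g H$ contributions cancel, leaving the clean formula $F(g) = -T\log Z$.

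The key step is the identity, valid for every $p$,
\[
F(p) - F(g) = T\sum_\sigma p(\sigma)\,\log\frac{p(\sigma)}{g(\sigma)} = T\cdot \mathrm{KL}(p\,\|\,g),
\]
which follows by the same substitution $\log g(\sigma) = -\beta H(\sigma) - \log Z$ inside the divergence and collecting terms (the divergence is always well defined because $g(\sigma) > 0$ for all $\sigma$). It then remains to invoke Gibbs' inequality: $\mathrm{KL}(p\,\|\,g) \ge 0$, with equality if and only if $p = g$. This in turn is a one‑line consequence of Jensen's inequality applied to the strictly convex function $-\log$, or of the elementary bound $\log x \le x - 1$. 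Combining, $F(p) \ge F(g)$ for all $p$, with equality exactly when $p = g$, which is precisely the claim.

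As an alternative route one could argue by convexity: $p \mapsto \EE_p H$ is linear on the probability simplex and $p \mapsto -S(p) = \EE_p\log p$ is strictly convex there, so $F$ is strictly convex and has a unique minimizer; a Lagrange‑multiplier computation enforcing $\sum_\sigma p(\sigma) = 1$ then identifies that minimizer as $g$. I do not anticipate a genuine obstacle here; the only points needing a little care are the atoms where $p(\sigma) = 0$ (handled by the $0\log 0 = 0$ convention) and, in a more general setting, replacing sums by integrals together with an assumption that $Z < \infty$ and that the relevant entropies are finite — none of which is an issue for the finite state space under consideration.
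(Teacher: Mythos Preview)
Your primary argument via the identity $F(p) - F(g) = T\cdot\mathrm{KL}(p\,\|\,g)$ is correct and yields both the minimizer and its uniqueness in one stroke through Gibbs' inequality. The paper, however, takes what you list as your ``alternative route'': it argues that $F$ is convex on the simplex (entropy concave, $\EE H$ linear), observes that the infinite slope of entropy at the boundary forces the minimizer into the interior, and then solves for the unique interior critical point with a Lagrange multiplier, arriving at $p(\sigma) \propto e^{-H(\sigma)/T}$. Your KL approach is arguably cleaner---it makes the gap $F(p)-F(g)$ explicit as a divergence and requires no calculus on the simplex---whereas the paper's variational computation has the modest advantage of not needing to guess the Gibbs form in advance.
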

\begin{proof}
Entropy is concave with infinite derivative at the edge of the probability simplex, and the expected Hamiltonian is linear in the distribution, so the free energy is convex and minimized in the interior of the simplex. We find the unique local (hence global) minimum with a Lagrange multiplier:
\begin{align*}
    \mathrm{const} \cdot \one &= \nabla F = \nabla \sum_\sigma p(\sigma) (H(\sigma) + T \log p(\sigma)) \\
    \mathrm{const} &= H(\sigma) + T \log p(\sigma) \\
    \mathrm{const} \cdot e^{-H(\sigma)/T} &= p(\sigma),
\end{align*}
which we recognize as the Gibbs distribution.
\end{proof}
This optimization approach is willing to trade off some energy for an increase in entropy, and can thus detect large clusters of states with a high aggregate likelihood, even when no individual state has the highest possible likelihood. Moreover, the free energy is convex, but it is a function of an arbitrary probability distribution on the state space, which is typically an exponentially large object.

We are thus led to ask the question: is there any way to reduce the problem of free energy minimization to a tractable, polynomial-size problem? Can we get a theoretical or algorithmic handle on this problem?

\section{The cavity method and belief propagation}
\label{sec:cavity}

\subsection{BP as an algorithm for inference}

\defn{Belief propagation} (BP) is a general algorithm for inference in graphical models, generally credited to Pearl \cite{pearl} (see e.g.\ \cite{mm-book} for a reference). As we've seen above, the study of graphical models is essentially the statistical physics of Hamiltonians consisting of interactions that each only depend on a few spins. Quite often, we care about the \defn{average case} study of random graphical models that describe a posterior distribution given some noisy observation of a signal, such as in the Rademacher spiked Wigner example discussed above. Much of statistical physics is concerned with disorder and random systems, and indeed the concept of belief propagation appeared in physics as the \defn{cavity method}---not only as an algorithm but as a theoretical means to make predictions about systems such as spin glasses \cite{mezard-parisi-virasoro}.

To simplify the setting and notation, let us consider sparse graphical models with only pairwise interactions:
$$ \Pr[\sigma] \propto \prod_{u \sim v} \psi_{uv}(\sigma_u,\sigma_v), $$
where each vertex $v$ has only relatively few ``neighbors'' $u$ (denoted $u \sim v$).

Belief propagation is an iterative algorithm. We think of each spin $\sigma_u$ as a vertex and each pair of neighbors as an edge. Each vertex tracks a ``belief'' about its own spin (more formally, an estimated posterior marginal). These beliefs are often initialized to something like a prior distribution, or just random noise, and then iteratively refined to become more consistent with the graphical model. This refinement happens as follows: each vertex $u$ transmits its belief to each neighbor, and then each vertex updates its belief based on the incoming beliefs of its neighbors. If we let $m_{u \to v}$ denote the previous beliefs sent from neighbors $u$ to a vertex $v$, we can formulate a new belief for $v$ in a Bayesian way, assuming that the incoming influences of the neighbor vertices are independent (more on this assumption below):
$$ m_v(\sigma_v) \propto \prod_{u \sim v} \sum_{\sigma_u} \psi_{uv}(\sigma_u,\sigma_v) m_{u \to v}(\sigma_u) $$
Each message $m_{u \to v}$ is a probability distribution (over the possible values for $\sigma_u$), with the proportionality constant being determined by probabilities summing to $1$ over all values of $\sigma_u$.

This is almost a full description of belief propagation, except for one detail. If the belief from vertex $v$ at time $t-2$ influences the belief of neighbor $u$ at time $t-1$, then neighbor $u$ should not parrot that influence back to neighbor $v$, reinforcing its belief at time $t$ without any new evidence. Thus we ensure that the propagation of messages does not immediately backtrack:
\begin{equation} m_{v \to w}^{(t)}(\sigma_v) \propto \prod_{\substack{u \sim v \\ u \neq w}} \sum_{\sigma_u} \psi_{uv}(\sigma_u,\sigma_v) m_{u \to v}^{(t-1)}(\sigma_u). \label{eq:bp} \end{equation}
This formula is the iteration rule for belief propagation.

The most suspicious aspect of the discussion above is the idea that neighbors of a vertex $v$ exert probabilistically independent influences on $v$. If the graphical model is a tree, then the neighbors are independent after conditioning on $v$, and in this setting it is a theorem (see e.g.\ \cite{mm-book}) that belief propagation converges to the exact posterior marginals. On a general graphical model, this independence fails, and belief propagation is heuristic. In many sparse graph models, neighborhoods of most vertices are trees, with most loops being long, so that independence might approximately hold. BP certainly fails in the worst case; outside of special cases such as trees it is certainly only suitable in an average-case setting. However, on many families of random graphical models, belief propagation is a remarkably strong approach; it is general, efficient, and often yields a state-of-the-art statistical estimate. It is conjectured in many models that belief propagation achieves asymptotically optimal inference, either among all estimators or among all polynomial-time estimators, but most rigorous results in this direction are yet to be established.

To connect to the previous viewpoint of free energy minimization: belief propagation is intimately connected with the \defn{Bethe free energy}, a heuristic proxy for the free energy which may be described in terms of the messages $m_{u \to v}$ (see \cite{statmech-survey}, Section III.B). It can be shown that the fixed points of BP are precisely the critical points of the Bethe free energy, justifying the view that BP is roughly a minimization procedure for the free energy. Again, rigorously the situation is much worse: the Bethe free energy is non-convex, and BP is not guaranteed to converge, let alone guaranteed to find the global minimum.

\subsection{The cavity method for the stochastic block model}

The ideas of belief propagation above appear as the \defn{cavity method} in statistical physics, owing to the idea that the Bethe free energy is believed to be essentially an accurate model for the true (Helmholtz) free energy on a variety of models of interest. In passing to the Bethe free energy, we can pass from studying a general distribution (an exponentially complicated object) to studying node and edge marginals, which are theoretically much simpler objects and, crucially, can be studied locally on the graph. Local neighborhoods of sparse graphs as in the SBM (stochastic block model) look like trees, and so we are drawn to studying message passing on a tree.

Much as for the Rademacher spiked Wigner model above, we derive a Hamiltonian from the block model posterior:
$$ H(\sigma) = \sum_{i \sim j} \theta_+ \sigma_i \sigma_j + \sum_{i \not\sim j} \theta_- \sigma_i \sigma_j, $$
where $u \sim v$ denotes adjacency in the observed graph, and $\theta_+ > 0 > \theta_-$ are constants depending on $a$ and $b$; $\theta_+$ is of constant order, while $\theta_-$ is of order $1/n$.
In expressing belief propagation, we will make a small notational simplification: instead of passing messages $m$ that are distributions over $\{+,-\}$, it suffices to pass the expectation $m(+) - m(-)$. The reader can verify that rewriting the belief propagation equations in this notation yields
$$ m_{u \to v}^{(t)} = \tanh\left( \sum_{\substack{w \sim u \\ w \neq v}} \atanh(\theta_+ m_{w \to u}^{(t-1)}) + \sum_{\substack{w \not\sim u \\ w \neq v}} \atanh(\theta_- m_{w \to u}^{(t-1)}) \right) $$
where $\tanh$ is the hyperbolic tangent function $\tanh(z) = (e^z - e^{-z})/(e^z + e^{-z})$, and $\atanh$ is its inverse.

The first term inside the $\tanh$ represents strong, constant-order attractions with the few graph neighbors, while the second term represents very weak, low-order repulsions with the multitude of non-neighbors. The value of the second term thus depends very little on any individual spin, but rather on the overall balance of positive and negative spins in the graph, with the tendency to cause the global spin configuration to become balanced. As we are only interested in a local view of message passing, we will assume here that the global configuration is roughly balanced and neglect the second term:
$$ m_{u \to v}^{(t)} \approx \tanh\left( \sum_{\substack{w \sim u \\ w \neq v}} \atanh(\theta_+ m_{w \to u}^{(t-1)}) \right). $$

As this message-passing only involves the graph edges, it now makes sense to study this on a tree-like neighborhood. We now discuss a generative model for (approximate) local neighborhoods under the stochastic block model.
\begin{model}[Galton--Watson tree] Begin with a root vertex, with spin $+$ or $-$ chosen uniformly. Recursively, each vertex gives birth to a Poisson number of child nodes: $\Pois((1-\eps)k)$ vertices of the same spin and $\Pois(\eps k)$ vertices of opposite spin, up to a total tree depth of $d$.
\end{model}
As shown in \cite{mns}, the Galton--Watson tree with $k = (a+b)/2$ and $\eps = b / (a+b)$ is distributionally very close to the radius-$d$ neighborhood of a vertex in the SBM with its true spins, so long as $d = o(\log n)$. Thus we will study belief propagation on a random Galton--Watson tree.

Let us consider only the BP messages passing toward the root of the tree. The upward message from any vertex $v$ is computed as:
\begin{equation} m_v = \tanh\left( \sum_u \atanh((1-2\eps) m_u) \right) \label{eq:treebp} \end{equation}
where $u$ ranges over the children of $v$. We now imagine that the child messages $m_u$ are independently drawn from some distribution $D_+^{(t-1)}$ for children with spin $+$, and (leveraging symmetry) from the distribution $D_-^{(t-1)} = -D_+^{(t-1)}$ for children with spin $-$; this distribution represents the randomness of our BP calculation below each child, over the random subtree hanging off each one. Then, from equation \eqref{eq:treebp}, together with the fact that there are $\Pois((1-\eps)k)$ same-spin children and $\Pois(\eps k)$ opposite-spin children, the distribution $D_{\pm}^{(t)}$ of the parent message $m$ is determined! Thus we obtain a distributional recurrence for $D_{+}^{(t)}$.

The calculation above is independent of $n$, and the radius of validity of the tree approximation grows with $n$, so we are interested in the behavior of the recurrence above as $t \to \infty$, i.e.\ fixed points of the distributional recurrence above and their stability. 

Typically one initializes BP with small random messages, a perturbation of the trivial all-$0$ fixed point that represents our prior. For small messages, we can linearize $\tanh$ and $\atanh$, and write $m_v \approx (1-2\eps) \sum_u m_u$. Then if the child distribution $D_+^{(t-1)}$ has mean $\mu$ and variance $\sigma^2$, it is easily computed that the parent distribution $D_+^{(t)}$ has mean $k(1-2\eps)^2 \mu$ and variance $k(1-2\eps)^2 \sigma^2$. Thus if $k (1-2\eps)^2 < 1$, then perturbations of the all-$0$ fixed point decay, or in other words, this fixed point is stable, and BP is totally uninformative on this typical initialization. If $k(1-2\eps)^2 > 1$, then small perturbations do become magnified under BP dynamics, and one imagines that BP might find a more informative fixed point (though this remains an open question!).

This transition is known as the \defn{Kesten--Stigum threshold} \cite{kesten-stigum}, and calculations of this form are loosely conjectured to describe the \defn{computational threshold} beyond which no efficient algorithm can perform inference, for sparse models such as the SBM. In this particular form of the SBM, this idea has been rigorously vindicated using techniques slightly different from BP: inference is known to be statistically impossible when $k(1-2\eps)^2 < 1$ (meaning that any estimator has zero correlation with the truth as $n \to \infty$) \cite{mns}, and efficiently possible when $k(1-2\eps)^2 > 1$ (meaning that asymptotically nonzero correlation is possible) \cite{massoulie,mns2}.

One might also endeavor to study the other fixed points of BP, not just the trivial fixed points. This is a difficult undertaking in most situations, as the BP recurrence lacks convexity properties, but it is expected to give an understanding of the \defn{statistical threshold} of the problem, i.e.\ the limit below which even inefficient inference techniques fail. This has been rigorously proven for some variants of the stochastic block model \cite{coja-cavity}. Intuitively, exploring the BP landscape by brute force for the best (in terms of Bethe free energy) BP fixed point is a statistically optimal inference technique. For more general stochastic block models with 4 or more communities, there exists a gap between the statistical threshold and the analogous Kesten--Stigum bound \cite{decelle,abbe-detection,bmnn}.

\section{Approximate message passing}
\label{sec:amp}

\subsection{AMP as a simplification of BP}

Our cavity analysis of the block model above was well-adapted to sparse models, in which the analysis localizes onto a tree of constant average degree. But many models, such as the Rademacher spiked Wigner model, are dense and their analysis cannot be local. Thankfully, many of these models are amenable to analysis for different reasons: as each vertex is acted on by a large number of individually weak influences, the quantities of interest in belief propagation are subject to central limit theorems and concentration of measure. In this section we will demonstrate this on the Rademacher spiked Wigner example.

Recall the Hamiltonian $H = \sum_{i < j} Y_{ij} \sigma_i \sigma_j$ and inverse temperature $\lambda$. As in the SBM discussion above, we can summarize BP messages by the expectation $m(+) - m(-)$. Then BP for this model reads as
$$ m_{u \to v}^{(t)} = \tanh\left( \sum_{w \neq v} \atanh(\lambda Y_{wu} m_{w \to u}^{(t-1)}) \right). $$
We next exploit the weakness of individual interactions. Note that the values $m_{w \to u}^{(t-1)}$ lie in $[-1,1]$, while $Y_{wu}$ is of order $n^{-1/2}$ in probability. Taylor-expanding $\atanh$, we simplify:
$$ m_{u \to v}^{(t)} = \tanh\left( \left(\sum_{w \neq v} \lambda Y_{wu} m_{w \to u}^{(t-1)} \right) + O(n^{-1/2}) \right) \quad \text{w.h.p.} $$

We next simplify the non-backtracking nature of BP. Na\"ively, one might expect that we can simply drop the condition $w \neq v$ from the sum above, as the contribution from vertex $v$ in the above sum should be only of size $n^{-1/2}$. As our formula for $m_{u \to v}^{(t)}$ would then no longer depend on $v$, we could write down messages indexed by a single vertex:
$$ m_u^{(t)} = \tanh\left( \sum_w \lambda Y_{wu} m_w^{(t-1)} \right), $$
or in vector notation,
\begin{equation} m^{(t)} = \tanh( \lambda Y m^{(t-1)} ), \label{eq:amp-no-onsager} \end{equation}
where $\tanh$ applies entrywise. This resembles the ``power iteration'' iterative algorithm to compute the leading eigenvector of $Y$:
$$ m^{(t)} = Y m^{(t-1)}, $$
but with $\tanh(\lambda \;\bullet\; )$ providing some form of soft projection onto the interval $[-1,1]$, exploiting the entrywise $\pm 1$ structure.

Unfortunately, the non-backtracking simplification above is flawed, and equation \eqref{eq:amp-no-onsager} does not accurately summarize BP or provide as strong an estimator. The problem is that the terms we have neglected add up constructively over two iterations. Specifically: consider that vertex $v$ exerts an influence $\lambda Y_{vu} m_v^{(t-2)}$ on each neighbor $u$; this small perturbation translates directly to a perturbation of $m_u^{(t-1)}$ (scaled by a derivative of $\tanh$). At the next iteration, vertex $u$ influences $m_v^{(t)}$ according to $\lambda Y_{vu} m_u^{(t-1)}$; the total contribution from backtracking here is thus $\lambda^2 Y_{vu}^2 m_v^{(t-2)}$, scaled through some derivatives of $\tanh$. This influence is a random, positive, order $1/n$ multiple of $m_v^{(t-2)}$. Summing over all neighbors $u$, we realize that the aggregate contribution of backtracking over two steps is in fact of order $1$.

Thankfully, this contribution is also a sum of small random variables, and exhibits concentration of measure. The solution is thus to subtract off this aggregate backtracking term in expectation, adding a correction called the \defn{Onsager reaction term}:
\begin{equation} m^{(t)} = \tanh\left( Y m^{(t-1)} - \lambda^2 (1-\|m\|_2^2/n) m^{(t-2)} \right). \label{eq:amp}\end{equation} 

This iterative algorithm is known as \defn{approximate message passing} (AMP). The simplifications above to BP first appeared in the work of Thouless, Anderson, and Palmer \cite{tap}, who used it to obtain a theoretical handle on spin glasses at high temperature. The first AMP algorithm \cite{amp} appeared in the context of compressed sensing. The AMP algorithm \eqref{eq:amp} for this problem can be found in \cite{dam}, and AMP has been applied to many other problems such as rank-one matrix estimation \cite{FR-iterative}, sparse PCA \cite{amp-sparse-pca}, non-negative PCA \cite{amp-nonneg-pca}, planted clique \cite{amp-clique}, and synchronization over groups \cite{pwbm-amp} (just to name a few).

\subsection{AMP state evolution}

In contrast to belief propagation, approximate message passing (AMP) algorithms tend to be amenable to exact analysis in the limit $n \to \infty$. Here we introduce \emph{state evolution}, a simple heuristic argument for the analysis of AMP that has been proven correct in many settings. The idea of state evolution was first introduced by \cite{amp}, based on ideas from \cite{bolthausen}; it was later proved correct in various settings \cite{bm,jm}.

We will focus again on the Rademacher spiked Wigner model: we observe
$$Y = \frac{\lambda}{n} x x^\top + \frac{1}{\sqrt n} W$$
where $x \in \{\pm 1\}^n$ is the true signal (drawn uniformly at random) and the $n \times n$ noise matrix $W$ is symmetric with the upper triangle drawn \iid as $\mathcal{N}(0,1)$. In this setting, the AMP algorithm and its analysis are due to \cite{dam}.

We have seen above that the AMP algorithm for this problem takes the form
$$v^{t+1} = Y f(v^t) + [Onsager]$$
where $f(v)$ denotes entrywise application of the function $f(v) = \tanh(\lambda v)$. (Here we abuse notation and let $f$ refer to both the scalar function and its entrywise application to a vector.) The superscript $t$ indexes timesteps of the algorithm (and is not to be confused with an exponent). The details of the Onsager term, discussed previously, will not be important here.

The state evolution heuristic proceeds as follows. Postulate that at timestep $t$, AMP's iterate $v^t$ is distributed as
\begin{equation}
\label{eq:se}
v^t = \mu_t x + \sigma_t g \quad \text{where } g \sim \mathcal{N}(0,I).
\end{equation}
This breaks down $v^t$ into a signal term (recall $x$ is the true signal) and a noise term, whose sizes are determined by parameters $\mu_t \in \RR$ and $\sigma_t \in \RR_{\ge 0}$. The idea of state evolution is to write down a recurrence for how the parameters $\mu_t$ and $\sigma_t$ evolve from one timestep to the next. In performing this calculation we will make two simplifying assumptions that will be justified later: (1) we drop the Onsager term, and (2) we assume the noise $W$ is independent at each timestep (i.e.\ there is no correlation between $W$ and the noise $g$ in the current iterate). Under these assumptions we have
\begin{align*}
v^{t+1} &= Yf(v^t) = \left(\frac{\lambda}{n} x x^\top + \frac{1}{\sqrt n} W \right)f(v^t) \\
&= \frac{\lambda}{n} \langle x, f(v^t) \rangle\, x + \frac{1}{\sqrt n} W f(v^t)
\end{align*}
which takes the form of (\ref{eq:se}) with a signal term and a noise term. We therefore have
\begin{align*}
\mu_{t+1} &= \frac{\lambda}{n} \langle x,f(v^t) \rangle = \frac{\lambda}{n} \langle x,f(\mu_t x + \sigma_t g) \rangle \\
& \approx \lambda \Ex_{X,G}[X f(\mu_t X + \sigma_t G)] \quad \text{with scalars } X \sim \mathrm{Unif}\{\pm 1\}, G \sim \mathcal{N}(0,1) \\
&= \lambda \Ex_G [f(\mu_t + \sigma_t G)] \quad\text{since $f(-v) = -f(v)$.}
\end{align*}
For the noise term, think of $f(v^t)$ as fixed and consider the randomness over $W$. Each entry of the noise term $\frac{1}{\sqrt n} W f(v^t)$ has mean zero and variance
\begin{align*}
(\sigma^{t+1})^2 &= \sum_i \frac{1}{n} f(v_i^t)^2 = \sum_i \frac{1}{n} f(\mu_t x_i + \sigma_t g_i)^2 \\
&\approx \lambda \Ex_{X,G}[f(\mu_t X + \sigma_t G)^2] \quad \text{with scalars $X,G$ as above} \\
&= \Ex_G [f(\mu_t + \sigma_t G)^2] \quad \text{again by symmetry of $f$.}
\end{align*}
We now have ``state evolution'' equations for $\mu_{t+1}$ and $\sigma_{t+1}$ in terms of $\mu_t$ and $\sigma_t$. Since we could arbitrarily scale our iterates $v^t$ without adding or losing information, we really only care about the parameter $\gamma \defeq (\mu/\sigma)^2$. It is possible (see \cite{dam}) to reduce the state evolution recurrence to this single parameter:
\begin{equation}
\label{eq:gam}
\gamma_{t+1} = \lambda^2 \Ex_{G \sim \cN(0,1)} \tanh(\gamma_t + \sqrt{\gamma_t} \,G)
\end{equation}
(where we have substituted the actual expression for $f$).

We can analyze AMP as follows. Choose a small positive initial value $\gamma_0$ and iterate (\ref{eq:gam}) until we reach a fixed point $\gamma_\infty$. We then expect the output $v^\infty$ of AMP to behave like
\begin{equation}
\label{eq:se-output}
v^\infty = \mu_\infty x + \sigma_\infty g
\end{equation}
where $g \sim \cN(0,I)$, $\mu_\infty = \gamma_\infty/\lambda$, and $\sigma_\infty^2 = \gamma_\infty/\lambda^2$. For the Rademacher spiked Wigner model, this has in fact been proven to be correct in the limit $n \to \infty$ \cite{bm,jm}. Namely, when we run AMP (with the Onsager term and without fresh noise $W$ at each timestep), the output behaves like (\ref{eq:se-output}) in a particular formal sense.

State evolution reveals a phase transition at $\lambda = 1$: when $\lambda \le 1$ we have $\gamma_\infty = 0$ (so AMP has zero correlation with the truth as $n \to \infty$) and when $\lambda > 1$ we have $\gamma_\infty > 0$ (so AMP achieves nontrivial correlation with the truth). Furthermore, from (\ref{eq:se-output}) we can deduce the value of any performance metric (e.g.\ mean squared error) at any signal-to-noise ratio $\lambda$. It has in fact been shown (for Rademacher spiked Wigner) that the mean squared error achieved by AMP is information-theoretically optimal \cite{dam}.

It is perhaps surprising that state evolution is correct, given the seemingly-questionable assumptions we made in deriving it. This can be understood as follows. Recall that we eliminated the Onsager term and assumed independent noise $W$ at each timestep. Also recall that the Onsager term is a correction that makes the update step non-backtracking: a message sent across an edge at one iteration does not affect the message sent back across the edge (in the opposite direction) at the next iteration. It turns out that to leading order, using fresh noise at each timestep is equivalent to using a non-backtracking update step. This is because the largest effect of fresh noise is to terms where a particular noise entry $W_{ij}$ is used twice in a row, i.e.\ backtracking steps. So the two assumptions we made actually cancel each other out! Note that both of the two assumptions are crucial in making the state evolution analysis tractable, so it is quite spectacular that we are able to make both of these assumptions for free (and still get the correct answer)!

One caveat in the rigorous analysis of AMP is that it assumes an initialization that has some nonzero correlation with the truth \cite{dam}. In other words, we need to assume that we start with some nonzero $\gamma$ because if we start with $\gamma = 0$ we will remain there forever. In practice this is not an issue; a small random initialization suffices.

\subsection{Free energy diagrams}

In this section we will finally see how to predict computational-to-statistical gaps (for dense problems)! Above we have seen how to analyze a particular algorithm: AMP. In various settings it has been shown that AMP is information-theoretically optimal. More generally, it is believed that AMP is optimal among all efficient algorithms (for a wide class of problems). We will now show how to use AMP to predict whether a problem should be easy, (computationally) hard or (statistically) impossible. The ideas here originate from \cite{lkz-mmse,lkz-sparse}.

Recall that the state of AMP is described by a parameter $\gamma$, where larger $\gamma$ indicates better correlation with the truth and $\gamma = 0$ means that AMP achieves zero correlation with the truth. Also recall that the \emph{Bethe free energy} is the quantity that belief propagation (or AMP) is locally trying to minimize. It is possible to analytically write down the function $f(\gamma)$ which gives the (Bethe) free energy of the AMP state corresponding to $\gamma$; in the next section, we will see one way to compute $f(\gamma)$. AMP can be seen as starting near $\gamma = 0$ and naively moving in the direction of lowest free energy until it reaches a local minimum; the $\gamma$ value at this minimum characterizes AMP's output. The information-theoretically optimal estimator is instead described by the global minimum of the free energy (and this has been proven rigorously in various cases \cite{replica-proof,lm}); this corresponds to the inefficient algorithm that uses exhaustive search to find the AMP state which globally minimizes free energy. Figure~\ref{fig:f} illustrates how the free energy landscape $f(\gamma)$ dictates whether the problem is easy, hard, or impossible at a particular $\lambda$ value.

\begin{figure}[!ht]
    \centering
    \begin{subfigure}[t]{0.47\textwidth}
        \centering
        \includegraphics[width=\linewidth]{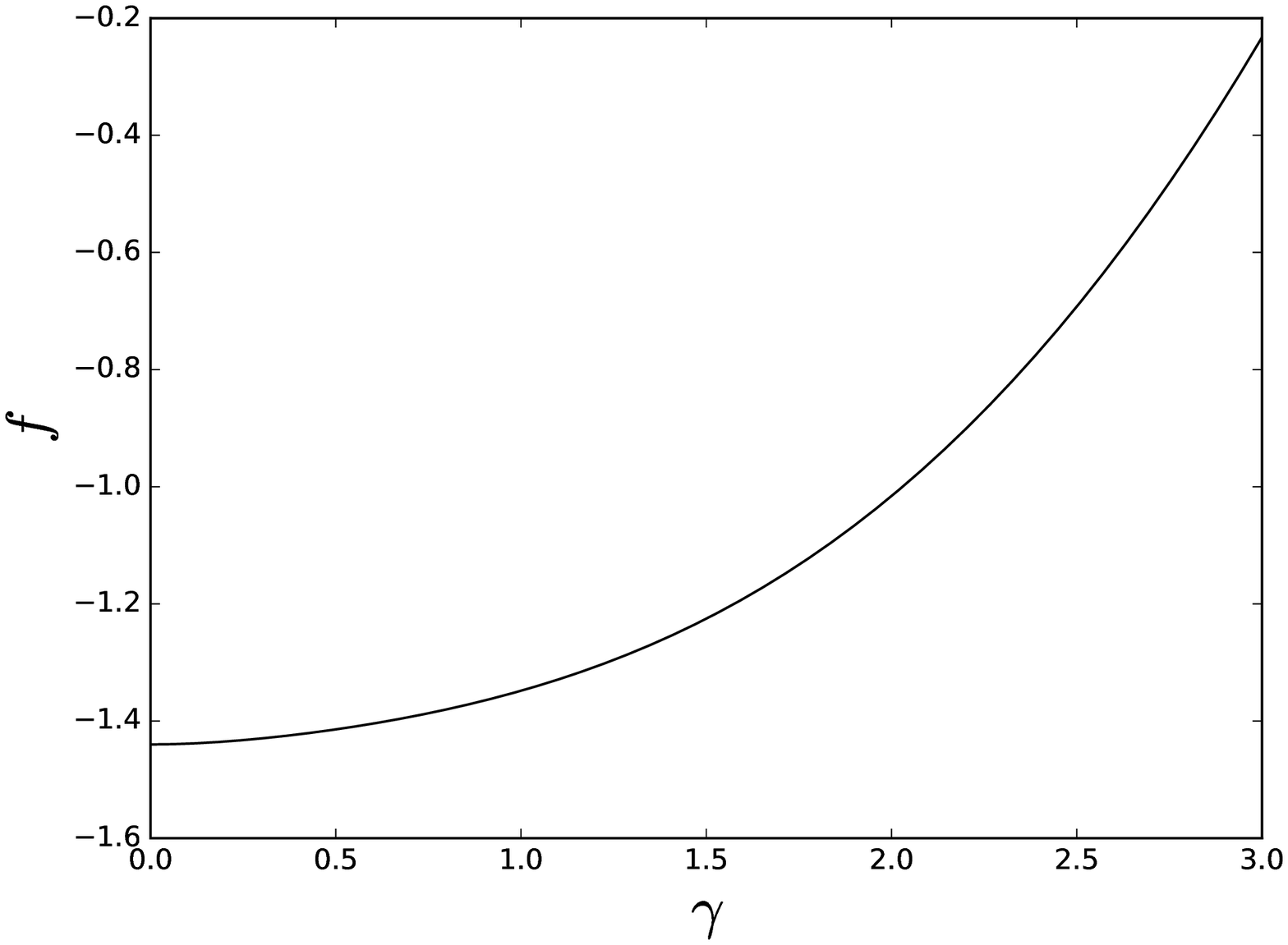}
        \captionof*{figure}{{\bf (a)} impossible}
    \end{subfigure}
    \hfill
    \begin{subfigure}[t]{0.47\textwidth}
        \centering
        \includegraphics[width=\linewidth]{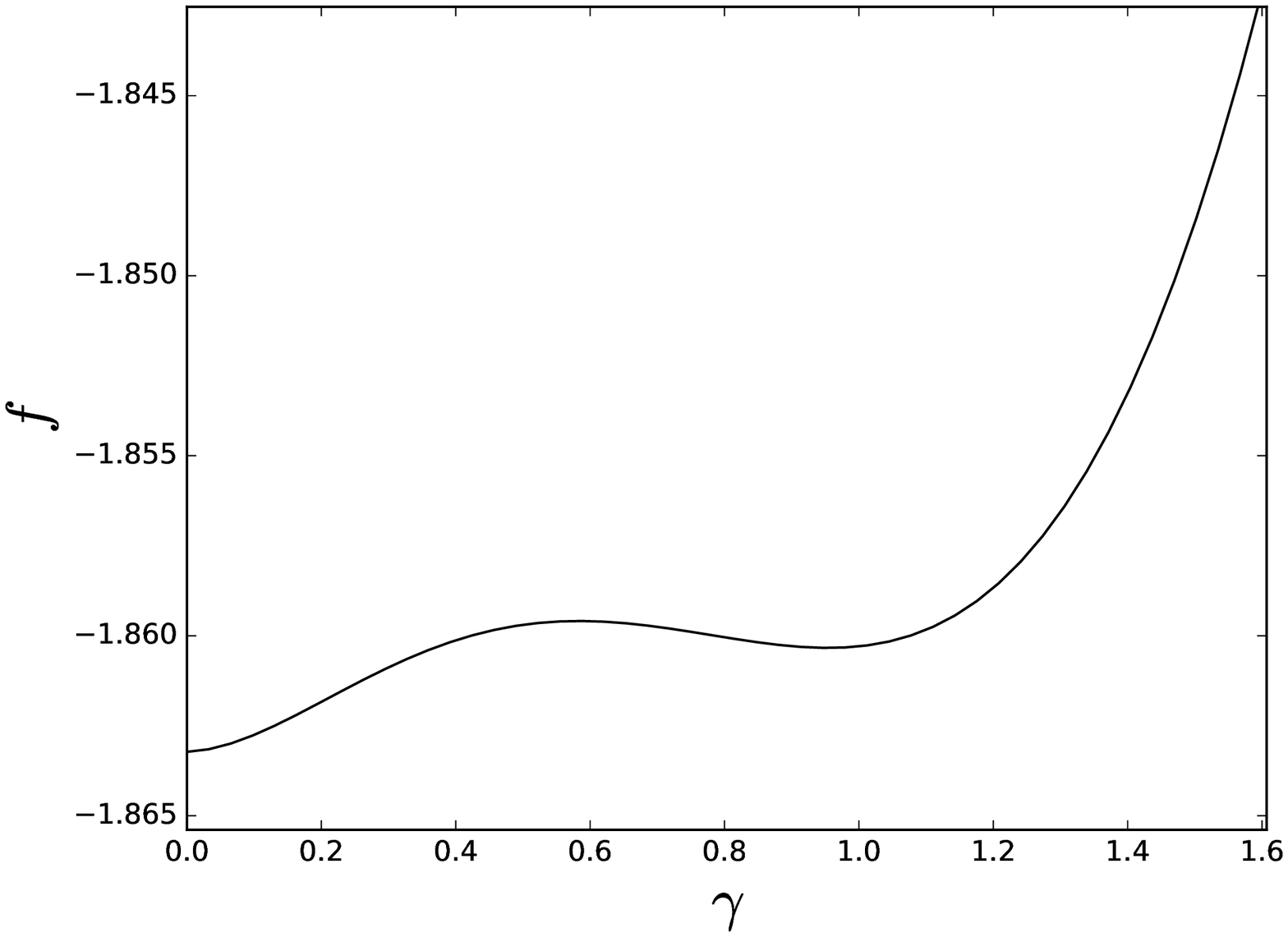}
        \captionof*{figure}{{\bf (b)} impossible}
    \end{subfigure}
    \begin{subfigure}[t]{0.47\textwidth}
        \centering
        \includegraphics[width=\linewidth]{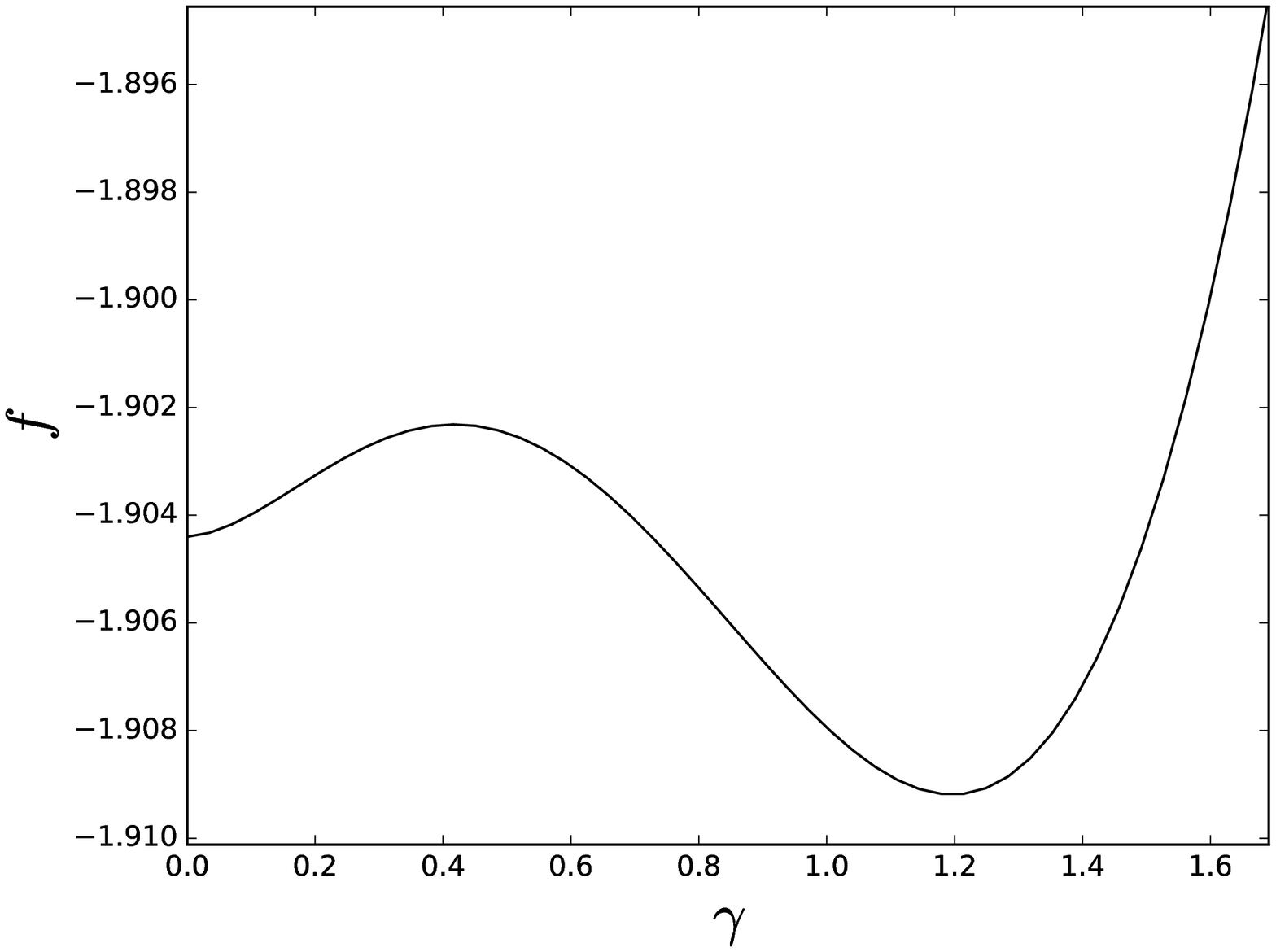}
        \captionof*{figure}{{\bf (c)} hard}
    \end{subfigure}
    \hfill
    \begin{subfigure}[t]{0.47\textwidth}
        \centering
        \includegraphics[width=\linewidth]{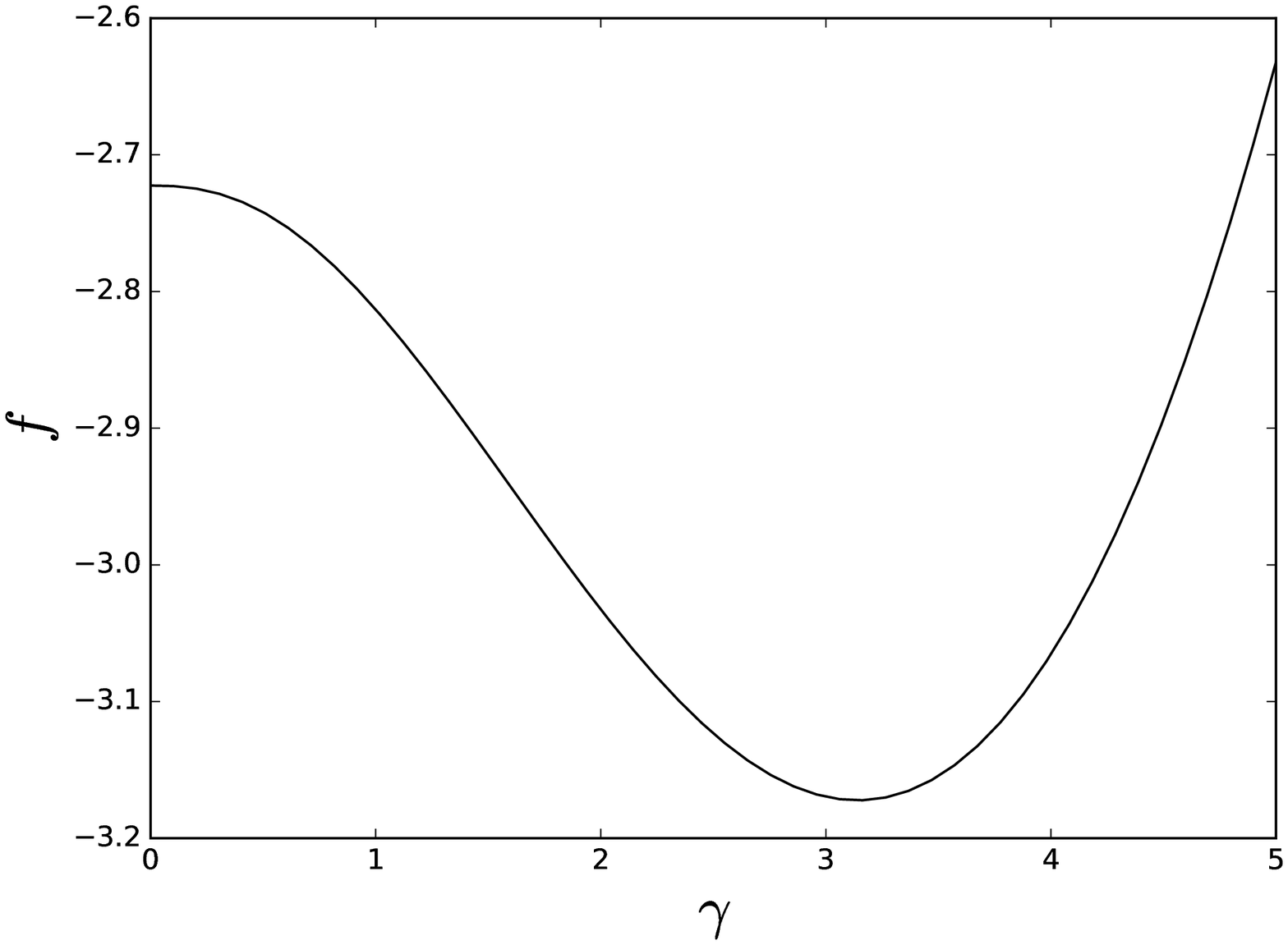}
        \captionof*{figure}{{\bf (d)} easy}
    \end{subfigure}
    \caption{{\bf (a)} The global minimizer is $\gamma = 0$ so no estimator achieves nontrivial recovery. {\bf (b)} A new local minimum in the free energy has appeared, but the global minimum is still at $\gamma = 0$ and so nontrivial recovery remains impossible. {\bf (c)} AMP is stuck at $\gamma = 0$ but the (inefficient) statistically optimal estimator achieves a nontrivial $\gamma$ (the global minimum). AMP is not statistically optimal. {\bf (d)} AMP achieves nontrivial (in fact optimal) recovery. The above image is adapted from \cite{pwbm-amp} (used with permission).}
    \label{fig:f}
\end{figure}

For Rademacher spiked Wigner, we have phase (a) (from Figure~\ref{fig:f}) when $\lambda \le 1$ and phase (d) when $\lambda > 1$, so there is no computational-to-statistical gap. However, for some variants of the problem (for instance if the signal $x$ is sparse, i.e.\ only a small constant fraction of entries are nonzero) then we see phases (a),(b),(c),(d) appear in that order as $\lambda$ increases; in particular, there is a computational-to-statistical gap during the hard phase (c).

Although many parts of this picture have been made rigorous in certain cases, the one piece that we do not have the tools to prove is that no efficient algorithm can succeed during the hard phase (c). This is merely conjectured based on the belief that AMP should be optimal among efficient algorithms.

There are a few different ways to compute the free energy landscape $f(\gamma)$. One method is to use the replica method discussed in the next section. Alternatively, there is a direct formula for Bethe free energy in terms of the BP messages, which can be adapted to AMP (see e.g.\ \cite{lkz-mmse}).

\section{The replica method}
\label{sec:replica}

The replica method is an alternative viewpoint that can derive many of the same results shown in the previous section. We will again use the example of Rademacher spiked Wigner to illustrate it. A general introduction to the replica method can be found in \cite{mm-book}. The calculations of this section are carried out in somewhat higher generality in Appendix~B of \cite{pwb-tensor}.

Recall again the setup: we observe $Y = \frac{\lambda}{n} x x^\top + \frac{1}{\sqrt n} W$ with $x \in \{\pm 1\}^n$ and $W_{ij} = W_{ji} \sim \cN(0,1)$.

The posterior distribution of $x$ given $Y$ is
$$\prob{x \,|\, Y} \propto \prod_{i < j} \exp\left(-\frac{n}{2}\left(\frac{\lambda}{n} x_i x_j - Y_{ij}\right)^2\right)
\propto \exp\left(\lambda \sum_{i < j} Y_{ij} x_i x_j \right)$$
and so we are interested in the Gibbs distribution over $\sigma \in \{\pm 1\}^n$ given by $\prob{\sigma \,|\, Y} \propto \exp(-\beta H(\sigma))$ with energy (Hamiltonian) $H(\sigma) = -\sum_{i < j} Y_{ij} \sigma_i \sigma_j$ and inverse temperature $\beta = \lambda$.

The goal is to compute the free energy density, defined as $f = -\frac{1}{\beta n} \EE \log Z$ where $$Z = \sum_{\sigma \in \{\pm 1\}^n} \exp(-\beta H(\sigma)).$$

\noindent (This can be shown to coincide with the notion of free energy introduced earlier.) The idea of the replica method is to compute the moments $\EE[Z^r]$ of $Z$ for $r \in \mathbb{N}$ and perform the (non-rigorous) analytic continuation
\begin{equation}
\label{eq:replica-trick}
\EE[\log Z] = \lim_{r \to 0} \frac{1}{r} \log \EE[Z^r].
\end{equation}
Note that this is quite bizarre -- we at first assume $r$ is a positive integer, but then take the limit as $r$ tends to zero! This will require writing $\EE[Z^r]$ in an analytic form that is defined for all values of $r$. An informal justification for the correctness of (\ref{eq:replica-trick}) is that when $r$ is close to $0$, $Z^r$ is close to 1 and so we can interchange $\log$ and $\EE$ on the right-hand side.

The moment $\EE[Z^r]$ can be expanded in terms of $r$ `replicas' $\sigma^1,\ldots,\sigma^r$ with $\sigma^a \in \{\pm 1\}^n$:
$$\EE[Z^r] = \sum_{\{\sigma^a\}} \EE\exp\left(\beta \sum_{i < j} Y_{ij} \sum_{a=1}^r \sigma_i^a \sigma_j^a\right).$$
After applying the definition of $Y$ and the Gaussian moment-generating function (to compute expectation over the noise $W$) we arrive at
$$\EE[Z^r] = \sum_{\{\sigma^a\}} \exp\left[n\left(\frac{\lambda^2}{2} \sum_a c_a^2 + \frac{\lambda^2}{4} \sum_{a,b} q_{ab}^2 \right)\right]$$
where $q_{ab} = \frac{1}{n} \sum_i \sigma_i^a \sigma_i^b$ is the correlation between replicas $a$ and $b$, and $c_a = \frac{1}{n} \sum_i \sigma_i^a x_i$ is the correlation between replica $a$ and the truth.

Without loss of generality we can assume (by symmetry) the true spike is $x = \one$ (all-ones). Let $Q$ be the $(r+1)\times (r+1)$ matrix of overlaps ($q_{ab}$ and $c_a$), including $x$ as the zeroth replica. Note that $Q$ is the average of $n$ \iid matrices and so by the theory of large deviations (Cram\'er's Theorem in multiple dimensions), the number of configurations $\{\sigma^a\}$ corresponding to given overlap parameters $q_{ab},c_a$ is asymptotically
\begin{equation}
\label{eq:rad-entropy}
\inf_{\mu,\nu} \exp\left[n\left(-\sum_a \nu_a c_a - \frac{1}{2} \sum_{a \ne b} \mu_{ab} q_{ab} + \log \sum_{\sigma \in \{\pm 1\}^r}\exp\left(\sum_a \nu_a \sigma_a + \frac{1}{2}\sum_{a \ne b} \mu_{ab} \sigma_a \sigma_b\right)\right)\right].
\end{equation}

We now apply the saddle point method: in the large $n$ limit, the expression for $\EE[Z^r]$ should be dominated by a single value of the overlap parameters $q_{ab},c_a$. This yields
$$\frac{1}{n} \log \EE[Z^r] = -G(q_{ab}^*,c_a^*,\mu_{ab}^*,\nu_a^*)$$
where $(q_{ab}^*,c_a^*,\mu_{ab}^*,\nu_a^*)$ is a critical point of
\begin{align*}
G(q_{ab},c_a,\mu_{ab},\nu_a) = &-\frac{\lambda^2}{2} \sum_a c_a^2 - \frac{\lambda^2}{4} \sum_{a,b}  q_{ab}^2 + \sum_a \nu_a c_a + \frac{1}{2} \sum_{a \ne b} \mu_{ab} q_{ab} \\
&- \log \sum_{\sigma \in \{\pm 1\}^r}\exp\left(\sum_a \nu_a \sigma_a + \frac{1}{2}\sum_{a \ne b} \mu_{ab} \sigma_a \sigma_b\right).
\end{align*}

We next assume that the dominant saddle point takes a particular form: the so-called \emph{replica symmetric ansatz}. The replica symmetric ansatz is given by $q_{aa} = 1$, $c_a = c$, $\nu_a = \nu$, and for $a \ne b$, $q_{ab} = q$ and $\mu_{ab} = \mu$ for constants $q,c,\mu,\nu$. This yields
\begin{equation}
\label{eq:G-lim}
\lim_{r \to 0} \frac{1}{r} G(q,c,\mu,\nu) = -\frac{\lambda^2}{2} c^2 - \frac{\lambda^2}{4} + \frac{\lambda^2}{4} q^2 + \nu c - \frac{1}{2} \mu(q-1) - \Ex_{z \sim \cN(0,1)} \log(2 \cosh(\nu + \sqrt{\mu} z))
\end{equation}
where the last term is handled as follows:
\begin{align*}
& \lim_{r \to 0} \frac{1}{r} \log \sum_{\sigma \in \{\pm 1\}^r}\exp\left(\sum_a \nu_a \sigma_a + \frac{1}{2}\sum_{a \ne b} \mu_{ab} \sigma_a \sigma_b\right)\\
=& \lim_{r \to 0} \frac{1}{r} \log \sum_{\sigma \in \{\pm 1\}^r}\exp\left(\nu \sum_a \sigma_a + \frac{\mu}{2}\sum_{a \ne b} \sigma_a \sigma_b\right)\\
=& \lim_{r \to 0} \frac{1}{r} \log \sum_{\sigma \in \{\pm 1\}^r}\exp\left(\nu \sum_a \sigma_a + \frac{\mu}{2} \sum_{a, b} \sigma_a \sigma_b - \frac{r \mu}{2} \right)\\
=& \lim_{r \to 0} \frac{1}{r} \log \sum_{\sigma \in \{\pm 1\}^r}\exp(-r \mu/2)\exp\left(\nu \sum_a \sigma_a + \frac{\mu}{2} \left(\sum_a \sigma_a\right)^2 \right)\\
=& -\frac{\mu}{2} + \lim_{r \to 0} \frac{1}{r} \log \sum_{\sigma \in \{\pm 1\}^r}\exp\left(\nu \sum_a \sigma_a + \frac{\mu}{2} \left(\sum_a \sigma_a\right)^2 \right)\\
\stackrel{(a)}{=}& -\frac{\mu}{2} + \lim_{r \to 0} \frac{1}{r} \log \sum_{\sigma \in \{\pm 1\}^r} \Ex_{z \sim \cN(0,1)} \exp\left(\nu \sum_a \sigma_a + \sqrt{\mu}z \sum_a \sigma_a \right)\\
=& -\frac{\mu}{2} + \lim_{r \to 0} \frac{1}{r} \log \Ex_{z \sim \cN(0,1)} \sum_{\sigma \in \{\pm 1\}^r} \exp\left((\nu + \sqrt{\mu}z) \sum_a \sigma_a \right)\\
=& -\frac{\mu}{2} + \lim_{r \to 0} \frac{1}{r} \log \Ex_{z \sim \cN(0,1)} \left[\exp(\nu + \sqrt{\mu}z) + \exp(-(\nu + \sqrt{\mu}z))\right]^r\\
=& -\frac{\mu}{2} + \lim_{r \to 0} \frac{1}{r} \log \Ex_{z \sim \cN(0,1)} (2 \cosh(\nu + \sqrt{\mu}z))^r\\
\stackrel{(b)}{=}& -\frac{\mu}{2} + \Ex_{z \sim \cN(0,1)} \log(2 \cosh(\nu + \sqrt{\mu}z))
\end{align*}
where (a) uses the Gaussian moment-generating function and (b) uses the replica trick (\ref{eq:replica-trick}).

We next find the critical points by setting the derivatives of (\ref{eq:G-lim}) (with respect to all four variables) to zero, which yields
$$\nu = \lambda^2 c, \qquad \mu = \lambda^2 q, \qquad c = \EE_z \tanh(\nu + \sqrt{\mu}z), \qquad q = \EE_z \tanh^2(\nu + \sqrt{\mu}z).$$

Recall that the replicas are drawn from the posterior distribution $\prob{x \,|\, Y}$ and so the truth $x$ behaves as if it is a replica; therefore we should have $c = q$. Using the identity $\EE_z \tanh(\gamma + \sqrt{\gamma}z) = \EE_z \tanh^2(\gamma + \sqrt{\gamma}z)$ (see e.g.\ \cite{dam}), we obtain the solution $c = q$ and $\nu = \mu$ where $q$ and $\mu$ are solutions to
\begin{equation}
\label{eq:q-mu}
\mu = \lambda^2 q, \qquad q = \Ex_{z \sim \cN(0,1)} \tanh(\mu + \sqrt{\mu}z).
\end{equation}
The solution $q$ to this equation tells us about the structure of the posterior distribution; namely, if we take two independent draws from this distribution, their overlap will concentrate about $q$. (Equivalently, the true signal $x$ and a draw from the posterior distribution will also have overlap that concentrates about $q$.) Note that (\ref{eq:q-mu}) exactly matches the state evolution fixed-point equation (\ref{eq:gam}) with $\mu$ in place of $\gamma$ and $q = \gamma/\lambda^2$.

The free energy density of a solution to (\ref{eq:q-mu}) is given by
$$f = \frac{1}{\beta} \lim_{r \to 0} \frac{1}{r} G(q,c,\mu,\nu) = \frac{1}{\lambda}\left[-\frac{\lambda^2}{4} (q^2 + 1) + \frac{1}{2} \mu (q+1) - \EE_z \log(2 \cosh(\mu + \sqrt{\mu}z))\right].$$
This is how one can derive the free energy curves such as those shown in Figure~\ref{fig:f}. If there are multiple solutions to (\ref{eq:q-mu}), we should take the one with minimum free energy.

Above, we had a Gibbs distribution corresponding to the posterior distribution of a Bayesian inference problem. In this setting, the replica symmetric ansatz is always correct; this is justified by a phenomenon in statistical physics: ``there is no static replica symmetry breaking on the Nishimori line'' (see e.g.\ \cite{statmech-survey,nish-book}).

More generally, one can apply the replica method to a Gibbs distribution that does not correspond to a posterior distribution (e.g.\ if the `temperature' of the Gibbs distribution does not match the signal-to-noise of the observed data). This is important when investigating computational hardness of random non-planted or non-Bayesian problems. In this case, the optimal (lowest free energy) saddle point can take various forms, which are summarized below; the form of the optimizer reveals a lot about the structure of the Gibbs distribution. An important property of a Gibbs distribution is its overlap distribution: the distribution of the overlap between two independent draws from the Gibbs distribution (in the large $n$ limit).

\begin{itemize}
\item RS (replica symmetric): The overlap matrix is $q_{aa} = 1$ and $q_{ab} = q$ for some $q \in [0,1]$. The overlap distribution is supported on a single point mass at value $q$. The Gibbs distribution can be visualized as having one large cluster where any two vectors in this cluster have overlap $q$. This case is ``easy'' in the sense that belief propagation can easily move around within the single cluster and find the true posterior distribution.
\item 1RSB (1-step replica symmetry breaking): The $r \times r$ overlap matrix takes the following form. The $r$ replicas are partitioned into blocks of size $m$. We have $q_{aa} = 1$, $q_{ab} = q_1$ if $a,b$ are in the same block, and $q_{ab} = q_2$ otherwise (for some $q_1,q_2 \in [0,1]$). The overlap distribution is supported on $q_1$ and $q_2$. The Gibbs distribution can be visualized as having a constant number of clusters. Two vectors in the same cluster have overlap $q_1$ whereas two vectors in different clusters have overlap $q_2$. This case is ``hard'' for belief propagation because it gets stuck in one cluster and cannot correctly capture the posterior distribution. The idea of replica symmetry breaking was first proposed in a groundbreaking work of Parisi \cite{parisi-rsb}.
\item 2RSB (2-step replica symmetry breaking): Now we have ``clusters of clusters.'' The overlap matrix has sub-blocks within each block. The overlap distribution is supported on 3 different values (corresponding to ``same sub-block'', ``same block (but different sub-block)'', ``different blocks''). The Gibbs distribution has a constant number of clusters, each with a constant number of sub-clusters. This is again ``hard'' for belief propagation.
\item FSRB (full replica symmetry breaking): We can define kRSB for any $k$ as above (characterized by an overlap distribution supported on $k+1$ values); FRSB is the limit of kRSB as $k \to \infty$. Here the overlap distribution is a continuous distribution.
\item d1RSB (dynamic 1RSB): This phase is similar to RS and (unlike kRSB for $k \ge 1$) can appear in Bayesian inference problems. The overlap matrix is the same as in the RS phase (and so the replica calculation proceeds exactly as in the RS case). However, the Gibbs distribution has exponentially-many small clusters. The overlap distribution is supported on a single point mass because two samples from the Gibbs distribution will be in different clusters with high probability. This phase is ``hard'' for BP (or AMP) because it cannot easily move between clusters. For a Bayesian inference problem, you can tell whether you are in the RS (easy) phase or 1dRSB (hard) phase by looking at the free energy curve; 1dRSB corresponds to the ``hard'' phase (c) in Figure~\ref{fig:f}.
\end{itemize}

\subsection*{Acknowledgements}
The authors would like to thank the engaging audience at the Courant Institute when this material was presented, and their many insightful comments. The authors would also like to thank Soledad Villar and Lenka Zdeborov\'a for feedback on earlier versions of this manuscript.

\small
\bibliographystyle{alpha}
\bibliography{main}

\end{document}